\newcommand{\setlinespacing}[1]%
           {\setlength{\baselineskip}{#1 \defbaselineskip}}
\newcommand{\BlockDiagk}[1]{\mbox{}\left(%
\begin{array}{cc}
  \Sigma_{k} & \bf{0} \\
  \bf{0} &  \Sigma_{\rho-k}\\
\end{array}\right)}
\newcommand{\BlockDiagkk}[1]{\mbox{}\left(%
\begin{array}{cc}
  \Sigma_{k} & \bf{0} \\
  \bf{0} & \bf{0} \\
\end{array}\right)}
\newcommand{\BlockDiagkrk}[1]{\mbox{}\left(%
\begin{array}{cc}
  \bf{0} & \bf{0} \\
  \bf{0} & \Sigma_{\rho-k} \\
\end{array}\right)}
\newcommand{\BlockDiagkkh}[1]{\mbox{}\left(%
\begin{array}{c}
  \Sigma_{k} \\
  \bf{0} \\
\end{array}\right)}
\newcommand{\BlockDiagkrkh}[1]{\mbox{}\left(%
\begin{array}{c}
  \bf{0} \\
  \Sigma_{\rho-k} \\
\end{array}\right)}
\long\def\killtext#1{}
\newcommand{\eps}{\varepsilon}
\newcommand{\ignore}[1]{}
\newcommand{\Prob}[1]{\ensuremath{\mathbb{P}\left(#1\right)}}
\newcommand{\EE}[1]{\ensuremath{\mathbb{E}\left[#1\right] } }
\newcommand{\var}[1]{\text{Var}\ensuremath{\left[#1\right] } }
\newcommand{\pinv}[1]{ {#1}^\dagger}
\newcommand{\norm}[1]{\ensuremath{\left\|#1\right\|_2}}
\newcommand{\frobnorm}[1]{\ensuremath{\left\|#1\right\|_{\text{\rm F}}}}
\def\RR  {\mathbb{R}}
\newtheorem{definition}{Definition}
\newtheorem{theorem}{Theorem}
\newtheorem{proposition}[theorem]{Proposition}
\newtheorem{lemma}[theorem]{Lemma}
\title{Random Projections for $k$-means Clustering}
\author{
Christos Boutsidis  \\
Department of Computer Science\\
RPI\\
\And
Anastasios Zouzias  \\
Department of Computer Science\\
University of Toronto\\
\And
Petros Drineas  \\
Department of Computer Science\\
RPI\\
}
\begin{document}

\maketitle

\begin{abstract}
This paper discusses the topic of dimensionality reduction for
$k$-means clustering. We prove that any set of $n$ points in $d$
dimensions (rows in a matrix $A \in \RR^{n \times d}$) can be projected into $t =
\Omega(k / \eps^2)$ dimensions, for any $\eps \in (0,1/3)$, in
$O(n d \lceil \eps^{-2} k/ \log(d) \rceil )$ time, such that with 
constant probability the optimal $k$-partition of the point
set is preserved within a factor of $2+\eps$. The projection is
done by post-multiplying $A$ with a $d \times t$ random
matrix $R$ having entries $+1/\sqrt{t}$ or $-1/\sqrt{t}$ with equal probability.
A numerical implementation of our technique and experiments on a large face images dataset
verify the speed and the accuracy of our theoretical results. 
\end{abstract}

\section{Introduction}
The $k$-means clustering algorithm~\cite{Llo82} was recently
recognized as one of the top ten data mining tools of the
last fifty years \cite{Wu07}. In parallel, random projections (RP)
or the so-called Johnson-Lindenstrauss type embeddings~\cite{JL84}
became popular and found applications in both theoretical computer
science \cite{AC06} and data analytics~\cite{BM01}. This paper
focuses on the application of the random projection method 
(see Section \ref{sec:rp}) to the
$k$-means clustering problem (see Definition \ref{def:kmeans}).
Formally, assuming as input a set of $n$ points in $d$ dimensions,
our goal is to randomly project the points into $\tilde{d}$ dimensions,
with $\tilde{d} \ll d$, and then apply a $k$-means clustering algorithm 
(see Definition \ref{def:approx}) on
the projected points. Of course, one should be able to
compute the projection fast without distorting significantly 
the ``clusters'' of the original point set. Our algorithm (see Algorithm
\ref{alg:selection}) satisfies both conditions by computing the
embedding in time linear in the size of the input and by distorting
the ``clusters'' of the dataset by a factor of at most $2
+ \eps$, for some $\eps \in (0,1/3)$ (see Theorem
\ref{thm:second_result}). We believe that the high dimensionality of
modern data will render our algorithm useful and attractive in many practical applications \cite{GGBD05}.

Dimensionality reduction encompasses the union of two different
approaches: \emph{feature selection}, which embeds the points into a
low-dimensional space by selecting actual dimensions of the data,
and \emph{feature extraction}, which finds an embedding by constructing
new artificial features that are, for example, linear combinations
of the original features. Let $A$ be an $n\times d$ matrix containing $n$ $d$-dimensional points ($A_{(i)}$ denotes the $i$-th point of the set), and let $k$ be the
number of clusters (see also Section \ref{sxn:notation} for more notation). We slightly abuse notation by also denoting by $A$ the $n$-point set formed by the rows of $A$. We say that an embedding $f: A \to \RR^{\tilde{d}}$ with $f(A_{(i)})=\tilde{A}_{(i)}$ for all $i\in{[n]}$ and some
$\tilde{d} < d$, preserves the clustering structure of $A$ within a factor
$\phi$, for some $\phi \geq 1$, if finding an optimal clustering in $\tilde{A}$
and plugging it back to $A$ is only a factor of $\phi$ worse than finding the optimal
clustering directly in $A$. Clustering optimality and approximability are formally presented in
Definitions \ref{def:kmeans} and \ref{def:approx}, respectively. 
Prior efforts on designing provably accurate dimensionality
reduction methods for $k$-means clustering include:
\textit{(i)} the Singular Value Decomposition (SVD), where one finds an
embedding with image $\tilde{A} = U_k \Sigma_k \in \RR^{n \times k}$ such that
the clustering structure is preserved within a factor of two; \textit{(ii)} random projections, where
one projects the input points into $ t = \Omega(\log(n)/ \eps^2)$ dimensions such
that with constant probability the clustering structure is preserved within a factor of $1+\eps$ (see Section~\ref{sec:rp}); 
\textit{(iii)} SVD-based feature selection,
where one can use the SVD to find $c = \Omega(k
\log(k/\eps) / \eps^2)$ actual features, i.e. an embedding with image $\tilde{A}\in \RR^{n \times c}$ containing (rescaled) columns
from $A$, such that with constant probability the clustering structure is preserved within a factor of
$2+\eps$. These results are summarized in Table \ref{table:all}. 
\begin{table}[htdp]
\begin{center}
\begin{tabular}{|c|c|c|c|c|c|}
\hline
\textbf{Year} & \textbf{Ref.} & \textbf{Description}              & \textbf{Dimensions}                              & \textbf{Time} & \textbf{Accuracy}          \\
\hline
1999 & \cite{DFKVV99}   & SVD - feature extraction & $k$                                     & $O(nd\min\{n,d\})$ & $2$          \\
\hline
 - & Folklore    & RP - feature extraction  & $\Omega(\log(n)/\eps^2)$            & $O(nd \lceil \eps^{-2} \log(n)/ \log(d) \rceil)$   & $1+\eps$ \\
\hline
2009 & \cite{BMD09b}    & SVD - feature selection  & $\Omega(k \log(k/\eps)/\eps^2)$ & $O(nd\min\{n,d\})$ & $2+\eps$ \\
\hline
2010 & This paper    & RP - feature extraction  & $\Omega(k / \eps^2)$                & $O(n d \lceil \eps^{-2} k/ \log(d) \rceil)$   & $2+\eps$ \\
\hline
\end{tabular}
\end{center}
\label{table:all} \caption{\small{Dimension reduction methods for
$k$-means. In the RP methods the construction is done with random sign matrices and the mailman algorithm 
(see Sections \ref{sec:rp} and \ref{sec:rt}, respectively)}.
}
\end{table}
A
head-to-head comparison of our algorithm with existing results
allows us to claim the following improvements: 
\textit{(i)} reduce the
running time by a factor of $\min\{n,d\} \lceil \eps^2\log(d) /k \rceil$, while losing only a
factor of $\eps$ in the approximation accuracy and a factor of
$1/\eps^2$ in the dimension of the embedding; \textit{(ii)} reduce
the dimension of the embedding and the running time by a factor of
$\log(n)/k$ while losing a factor of one in the approximation accuracy; \textit{(iii)} reduce the dimension of the embedding by a factor
of $\log(k/\eps)$ and the running time by a factor of $\min\{n,d\} \lceil \eps^2\log(d) /k \rceil$, 
respectively. Finally, we should point out that other 
techniques, for example the Laplacian scores~\cite{HCN06} or the Fisher scores~\cite{FS75}, 
are very popular in applications (see also surveys on the topic~\cite{GE03,KCS10}). However, they 
lack a theoretical worst case analysis of the form we describe in this work.

\section{Preliminaries} \label{sec:intro}
We start by formally defining the $k$-means clustering problem using matrix notation. Later in this section, we precisely describe the approximability framework adopted 
in the $k$-means clustering literature and fix the notation.
\begin{definition} \label{def:kmeans}
\textsc{[The k-means clustering problem]}\\
Given a set of $n$ points in $d$ dimensions (rows in an $n \times
d$ matrix $A$) and a positive integer $k$ denoting the number of
clusters, find the $n \times k$ indicator matrix $X_{opt}$ such
that
\begin{equation}
\label{eqn:def}
 X_{opt} = \arg \min_{X \in \cal{X}} \frobnorm{A - X X^\top  A}^2.
\end{equation}
\end{definition}
Here $\cal{X}$ denotes the set of all $n \times k$
indicator matrices $X$. The functional $F(A,X) = \frobnorm{A - X
X^\top  A}^2$ is the so-called $k$-means objective function. 
An $n \times k$ indicator matrix has exactly one non-zero element per row,
which denotes cluster membership. Equivalently, for all
$i=1,\ldots,n$ and $j=1,\ldots,k$, the $i$-th point
belongs to the $j$-th cluster if and only if $X_{ij}=1/\sqrt{z_j}$, where $z_j$ denotes the number of
points in the corresponding cluster. Note that $X^\top X = I_k$, where $I_k$ is the
$k\times k$ identity matrix. 

\subsection{Approximation Algorithms for $k$-means clustering}
Finding $X_{opt}$ is an NP-hard problem even for
$k=2$~\cite{ADHP09}, thus research has focused on developing
approximation algorithms for $k$-means clustering. The following
definition captures the framework of such efforts.
\begin{definition} \label{def:approx}
\textsc{[k-means approximation algorithm]}\\
An algorithm is a ``$\gamma$-approximation'' for the $k$-means
clustering problem ($\gamma \geq 1$) if it takes inputs $A$ and
$k$, and returns an indicator matrix $X_{\gamma}$ that satisfies
with probability at least $1 - \delta_{\gamma}$,
\begin{equation}
\frobnorm{A - X_{\gamma} X_{\gamma}^\top  A}^2 \leq \gamma \min_{X
\in \cal{X}} \frobnorm{A - X X^\top  A}^2.
\end{equation}
In the above, $\delta_{\gamma} \in [0,1)$ is the failure
probability of the $\gamma$-approximation $k$-means algorithm.
\end{definition}
For our discussion, we fix the $\gamma$-approximation algorithm to be the one presented in~\cite{KSS04}, which guarantees 
$\gamma =1+\eps'$ for any $\eps' \in (0, 1]$ with running time $O( 2^{(k/\eps')^{O(1)}} d n )$. 
\subsection{Notation} \label{sxn:notation}
Given an $n \times d$ matrix $A$ and an integer $k$ with $k < \min\{n,d\}$, let $U_k \in
\mathbb{R}^{n \times k}$ (resp. $V_k \in \mathbb{R}^{d \times k}$)
be the matrix of the top $k$ left (resp. right) singular vectors
of $A$, and let $\Sigma_k \in \mathbb{R}^{k \times k}$ be a
diagonal matrix containing the top $k$ singular values of $A$ in
non-increasing order. If we let $\rho$ be the rank of $A$, then
$A_{\rho-k}$ is equal to $A - A_k$, with $A_k = U_k\Sigma_k
V_k^\top $. By $A_{(i)}$ we denote the $i$-th row of $A$. For an index $i$ taking values in the set $\{1,
\ldots, n\}$ we write $i \in [n]$. We denote, in non-increasing
order, the non-negative singular values  of $A$ by $\sigma_i(A)$
with $i \in [\rho]$. $\frobnorm{A}$ and $\norm{A}$ denote the
Frobenius and the spectral norm of a matrix $A$, respectively.
$\pinv{A}$ denotes the pseudo-inverse of $A$, i.e. the unique $d \times
n$ matrix satisfying $A = A \pinv{A} A$, $\pinv{A} A \pinv{A} = \pinv{A}$,
$(A \pinv{A} )^\top = A \pinv{A}$, and $(\pinv{A} A )^\top = \pinv{A} A$.
Note also that
$\norm{\pinv{A}} = \sigma_1(\pinv{A}) = 1 / \sigma_{\rho}(A)$ and
$\norm{A} = \sigma_1(A) = 1 / \sigma_{\rho}(\pinv{A})$. A useful
property of matrix norms is that for any two matrices $C$ and $T$
of appropriate dimensions, $\frobnorm{CT} \leq \frobnorm{C}
\norm{T}$; this is a stronger version of the standard
submultiplicavity property. We call $P$ a projector matrix if it
is square and $P^2=P$. We use $\EE{Y}$ and $\var{Y}$ to take the
expectation and the variance of a random variable $Y$ and
$\Prob{e}$ to take the probability of an event $e$. We abbreviate
``independent identically distributed'' to ``i.i.d.'' and ``with
probability'' to ``w.p.''. Finally, all logarithms are base two.  
\subsection{Random Projections}\label{sec:rp}
A classical result of Johnson and Lindenstrauss states that any $n$-point set in $d$ dimensions - rows in a matrix $A \in \RR^{n \times d}$ - can be 
linearly projected into $t=\Omega(\log (n) /\eps^2)$ dimensions while preserving pairwise distances within a factor of $1\pm\eps$ using a 
random orthonormal matrix~\cite{JL84}. Subsequent research simplified the proof of the above result by showing that such a projection can be generated using a $d \times t$ random Gaussian 
matrix $R$, i.e., a matrix whose entries are i.i.d. Gaussian random variables with zero mean and variance $1/\sqrt{t}$~\cite{IM:curse_dim}. More precisely, the following inequality holds with high probability over the randomness of $R$,
\begin{equation}\label{eqn:rp}
(1- \eps) \norm{ A_{(i)} - A_{(j)} } \leq  \norm{ A_{(i)}R - A_{(j)}R } \leq  (1 + \eps) \norm{ A_{(i)} - A_{(j)}}.
\end{equation}
Notice that such an embedding $\tilde{A} = A R$ preserves 
the metric structure of the point-set, so it also preserves, within a factor of $1+\eps$, the optimal value of the $k$-means objective function of $A$. 
Achlioptas proved that even a (rescaled) random sign matrix suffices in order to get the same guarantees as above~\cite{Ach03}, an approach that we adopt here (see step two in 
Algorithm \ref{alg:selection}). Moreover, in this paper we will heavily exploit the structure of such a random matrix, and obtain, as an added bonus, savings on the computation of the projection.
\section{A random-projection-type $k$-means algorithm}
\vspace{-0.1in} \label{sec:algo} Algorithm \ref{alg:selection}
takes as inputs the matrix $A \in \RR^{n\times d}$, the number of
clusters $k$, an error parameter $\eps \in (0,1/3)$, and some
$\gamma$-approximation $k$-means algorithm. It returns an
indicator matrix $X_{\tilde{\gamma}}$  determining a $k$-partition
of the rows of $A$. 
\begin{algorithm}
\begin{framed}
\textbf{Input:} $n \times d$ matrix $A$ ($n$ points, $d$ features), number of clusters $k$, error parameter $\eps\in{(0,1/3)}$, and $\gamma$-approximation $k$-means algorithm. \\
\noindent \textbf{Output:} Indicator matrix $X_{\tilde{\gamma}}$
determining a $k$-partition on the rows of $A$.

\begin{enumerate}

  \item Set $t = \Omega(k / \eps^2)$, i.e. set $t = t_o \geq c k /\eps^2$ for a sufficiently large constant $c$.
\vspace{-0.05in}
  \item Compute a random $d \times t$ matrix $R$ as follows. For all $i \in [d]$, $j \in [t]$
   \[ R_{ij} = \begin{cases}
       +1/\sqrt{t}, \text{w.p. 1/2},\\
      -1/\sqrt{t}, \text{w.p. 1/2}.
\end{cases} \]
\vspace{-0.15in}
   \item Compute the product $\tilde{A} = A R$.
\vspace{-0.05in}
   \item Run the $\gamma$-approximation algorithm on $\tilde{A}$ to obtain
   $X_{\tilde{\gamma}}$; Return the indicator matrix $X_{\tilde{\gamma}}$

\end{enumerate}

\caption{ A random projection algorithm for $k$-means clustering.}
\label{alg:selection}

\end{framed}
\end{algorithm}
%
\subsection{Running time analysis} 
\label{sec:rt}
Algorithm \ref{alg:selection}
reduces the dimensions of $A$ by post-multiplying it with a random sign 
matrix $R$. Interestingly, any ``random projection matrix''
$R$ that respects the properties of Lemma~\ref{lem:sarlos} 
with $t = \Omega(k / \eps^2)$ can be used in this step. If $R$
is constructed as in Algorithm \ref{alg:selection},
one can employ the so-called mailman algorithm for matrix
multiplication~\cite{LZ09} and compute the product $A R$ in $O(n d \lceil \eps^{-2} k/ \log(d) \rceil )$ time. 
Indeed, the mailman algorithm computes (after preprocessing
\footnote{\small{Reading the input $d \times \log d$ sign matrix requires $O(d\log d)$ time. However, in our case we only consider 
multiplication with a \emph{random} sign matrix, therefore we can avoid the preprocessing step by directly computing 
a random \emph{correspondence} matrix as discussed in~\cite[Preprocessing Section]{LZ09}.} })
 a matrix-vector product of any $d$-dimensional vector (row of $A$) with an 
$d \times \log(d)$ sign matrix in $O(d)$ time. 
By partitioning the columns of our $d\times t$ matrix $R$ into $\lceil t/\log(d)\rceil$ blocks, the claim follows. 
Notice that when 
$k= O(\log(d))$, then we get an - almost - linear time complexity $O(nd/\eps^2 )$. 
The latter assumption is reasonable in our
setting since the need for dimension reduction in $k$-means clustering arises usually in high-dimensional data (large $d$). 
Other choices of $R$ would give the same approximation results;
the time complexity to compute the embedding would be different
though. A matrix where each entry is a random Gaussian variable
with zero mean and variance $1/\sqrt{t}$ would imply an $O(k
n d / \eps^2)$ time complexity (naive multiplication).
In our experiments in Section \ref{sec:experiments} we experiment
with the matrix $R$ described in Algorithm \ref{alg:selection} and employ MatLab's 
matrix-matrix BLAS implementation to proceed in the third step of the algorithm. We also experimented with a novel MatLab/C implementation of the 
mailman algorithm but, in the general case, we were not able to outperform MatLab's built-in routines (see section \ref{sec:mail}).

Finally, note that any $\gamma$-approximation
algorithm may be used in the last step of Algorithm
\ref{alg:selection}. Using, for example, the algorithm of~\cite{KSS04} with $\gamma = 1+\eps$ 
would result in an algorithm that preserves the clustering within a factor of $2 + \eps$, for 
any $\eps \in (0,1/3)$, running in time 
$O( n d \lceil \eps^{-2} k/ \log(d) \rceil + 2^{(k/\eps)^{O(1)}} k n / \eps^2 )$. In practice though, the Lloyd
algorithm~\cite{Llo82,ORSS06} is very popular and although it does not admit a 
worst case theoretical analysis, it empirically does well. 
We thus employ the Lloyd algorithm for our experimental evaluation of our algorithm in Section~\ref{sec:experiments}.
Note that, after using the proposed dimensionality reduction method, the cost of the 
Lloyd heuristic is only $O( n k^2 /\eps^2 )$ per iteration. This should be compared to the cost of  
$O( k n d )$ per iteration if applied on the original high dimensional data.
\section{Main Theorem}
Theorem \ref{thm:second_result} is our main
quality-of-approximation result for Algorithm \ref{alg:selection}.
Notice that if $\gamma = 1$, i.e. if the $k$-means problem with
inputs $\tilde{A}$ and $k$ is solved exactly, Algorithm~\ref{alg:selection} 
guarantees a distortion of at most $2+\eps$,
as advertised.
\begin{theorem}\label{thm:second_result}
Let the $n \times d$ matrix $A$ and the positive integer $k< \min\{n, d\}$ be
the inputs of the $k$-means clustering problem. Let $\eps \in
(0,1/3)$ and assume access to a $\gamma$-approximation $k$-means
algorithm. Run Algorithm \ref{alg:selection} with inputs $A$, $k$,
$\eps$, and the $\gamma$-approximation algorithm in order to
construct an indicator matrix $X_{\tilde{\gamma}}$. Then with
probability at least $0.97 - \delta_{\gamma}$,
\begin{eqnarray} \label{eqn:main2}
\frobnorm{A - X_{\tilde{\gamma}} X_{\tilde{\gamma}}^\top  A}^2
\leq \left(1+(1+\eps)\gamma\right) \frobnorm{A - X_{opt}
X_{opt}^\top  A}^2.
\end{eqnarray}
\end{theorem}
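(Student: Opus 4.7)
The plan is to decompose the objective with the SVD and bound the tail directly. Write $A = A_k + A_{\rho-k}$; since $A_k^\top A_{\rho-k} = 0$, matrix Pythagoras gives for any indicator matrix $X$:
\[\frobnorm{A - XX^\top A}^2 = \frobnorm{(I-XX^\top)A_k}^2 + \frobnorm{(I-XX^\top)A_{\rho-k}}^2.\]
The tail summand is at most $\frobnorm{A_{\rho-k}}^2 \leq F_{opt} := \frobnorm{A - X_{opt} X_{opt}^\top A}^2$, since $A_k$ is the best rank-$k$ approximation and $X_{opt} X_{opt}^\top A$ has rank at most $k$. This contributes the leading ``$+1$'' in the target $(1+(1+\eps)\gamma) F_{opt}$, so the rest of the proof is devoted to establishing $\frobnorm{(I-X_{\tilde\gamma} X_{\tilde\gamma}^\top) A_k}^2 \leq (1+\eps)\gamma F_{opt}$.

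I rely on three probabilistic properties of $R$ with $t = \Omega(k/\eps^2)$, each holding with probability at least $0.99$: (i) a subspace embedding for $V_k$, namely that the singular values of $V_k^\top R$ lie in $[\sqrt{1-\eps}, \sqrt{1+\eps}]$ (Lemma~\ref{lem:sarlos}); (ii) the approximate matrix multiplication bound $\frobnorm{A_{\rho-k} R R^\top V_k} \leq O(\eps)\frobnorm{A_{\rho-k}}$, which exploits the identity $A_{\rho-k} V_k = 0$; and (iii) a single-matrix Johnson--Lindenstrauss bound $\frobnorm{(I - X_{opt} X_{opt}^\top) A R}^2 \leq (1+\eps) F_{opt}$ for the fixed matrix $(I - X_{opt}X_{opt}^\top)A$. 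Combined with the event that the $\gamma$-approximation on $\tilde A$ succeeds (failure probability $\delta_\gamma$), a union bound yields success probability at least $0.97 - \delta_\gamma$.

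For the top-rank term, apply (i) to pass to the projected problem: the rows of $A_k$ lie in the column span of $V_k$, so $\frobnorm{(I-X_{\tilde\gamma}X_{\tilde\gamma}^\top) A_k}^2 \leq (1-\eps)^{-1}\frobnorm{(I-X_{\tilde\gamma}X_{\tilde\gamma}^\top) A_k R}^2$. Substitute $A_k R = \tilde A - A_{\rho-k} R$ and expand the squared Frobenius norm exactly; dropping the non-negative $\frobnorm{(I-X_{\tilde\gamma}X_{\tilde\gamma}^\top) A_{\rho-k}R}^2$ summand yields
\[ \frobnorm{(I-X_{\tilde\gamma}X_{\tilde\gamma}^\top) A_k R}^2 \leq \frobnorm{(I-X_{\tilde\gamma}X_{\tilde\gamma}^\top)\tilde A}^2 + 2|\text{cross}|, \]
where the inner-product ``cross'' satisfies $|\text{cross}| \leq O(\eps)\frobnorm{(I-X_{\tilde\gamma}X_{\tilde\gamma}^\top)A_k}\sqrt{F_{opt}}$ by Cauchy--Schwarz and (ii), after rewriting $A_k^\top$ via its SVD so the factor $A_{\rho-k} R R^\top V_k$ appears. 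The $\gamma$-approximation on $\tilde A$ combined with (iii) bounds the first summand by $\gamma(1+\eps) F_{opt}$. Setting $a := \frobnorm{(I-X_{\tilde\gamma}X_{\tilde\gamma}^\top)A_k}$, the resulting inequality is quadratic in $a$ and solves via AM--GM to $a^2 \leq (1+O(\eps))\gamma F_{opt}$, which matches the desired $(1+\eps)\gamma F_{opt}$ after rescaling $\eps$. The main obstacle is precisely maintaining this tight multiplier: the naive split $\frobnorm{x+y}^2 \leq 2\frobnorm{x}^2 + 2\frobnorm{y}^2$ would deliver only $(1+2\gamma) F_{opt}$, and Young's weighted version blows up by a $1/\eps$ factor. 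The two refinements that rescue the bound are the AMM-based cross-term cancellation (ii) and, crucially, the single-matrix JL event (iii), which bypasses the otherwise lossy $A_k/A_{\rho-k}$ decomposition of $\frobnorm{(I-X_{opt}X_{opt}^\top)\tilde A}^2$.
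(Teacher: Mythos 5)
Your proof is correct, but it reaches the bound on the head term $\frobnorm{(I-X_{\tilde\gamma}X_{\tilde\gamma}^\top)A_k}^2$ by a genuinely different route than the paper. You share with the paper the Pythagorean split into head and tail, the bound $\frobnorm{(I-X_{\tilde\gamma}X_{\tilde\gamma}^\top)A_{\rho-k}}^2\le\frobnorm{A-X_{opt}X_{opt}^\top A}^2$ via SVD optimality, the step that swaps $X_{\tilde\gamma}$ for $X_{opt}$ at the cost of $\gamma$, and the same three probabilistic ingredients (singular-value preservation of $V_k^\top R$, approximate matrix multiplication exploiting $A_{\rho-k}V_k=0$, and the Chebyshev-type bound $\frobnorm{CR}^2\le(1+\eps)\frobnorm{C}^2$ for fixed $C$), so the union-bound accounting giving $0.97-\delta_\gamma$ matches. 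Where you diverge: the paper proves a standalone reconstruction identity $A_k=(AR)\pinv{(V_k^\top R)}V_k^\top+E$ with $\frobnorm{E}\le4\eps\frobnorm{A_{\rho-k}}$ (its Lemma~\ref{lem:decomposition}, which in turn needs the pseudo-inverse perturbation bound $\norm{\pinv{(V_k^\top R)}-(V_k^\top R)^\top}\le3\eps$ of Lemma~\ref{lem:sigma_bound}), and then chains triangle inequality and submultiplicativity on \emph{unsquared} norms before squaring at the end. You instead use the subspace-embedding property directly as a lower bound $\frobnorm{(I-XX^\top)A_kR}^2\ge(1-\eps)\frobnorm{(I-XX^\top)A_k}^2$ (valid since the rows of $(I-XX^\top)A_k$ lie in the span of the columns of $V_k$), expand $\frobnorm{(I-XX^\top)\tilde A}^2$ exactly, and absorb the cross term via AMM and a quadratic inequality in $a$. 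One step a reader must parse carefully: the ``drop the non-negative summand'' move is only valid once you take the cross term to be $\langle(I-XX^\top)A_kR,(I-XX^\top)A_{\rho-k}R\rangle$ (so that $\frobnorm{(I-XX^\top)A_{\rho-k}R}^2$ appears with a minus sign after solving for $\frobnorm{(I-XX^\top)A_kR}^2$); your stated Cauchy--Schwarz bound involving $\frobnorm{(I-XX^\top)A_k}$ and the factor $A_{\rho-k}RR^\top V_k$ confirms that this is the cross term you intend, and with that reading the algebra closes. The trade-off: the paper's route isolates a reusable sketched-low-rank-reconstruction lemma (inherited from Sarl\'os) but pays for it with pseudo-inverse manipulations; yours avoids pseudo-inverses entirely and is more self-contained, at the price of the AM--GM/quadratic-inequality finish. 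Both yield $1+(1+O(\eps))\gamma$ and conclude by rescaling $\eps$.
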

\subsection*{Proof of Theorem~\ref{thm:second_result}}
The proof of Theorem \ref{thm:second_result} employs several
results from \cite{Sar06} including Lemma~$6$, $8$ and Corollary~$11$. We summarize these results in Lemma
\ref{lem:sarlos} below. Before employing Corollary $11$, Lemma $6$, and Lemma $8$ from
\cite{Sar06} we need to make sure that the matrix $R$ constructed
in Algorithm \ref{alg:selection} is consistent with Definition $1$
and Lemma $5$ in \cite{Sar06}. Theorem $1.1$ of \cite{Ach03} immediately shows that 
the random sign matrix $R$ of Algorithm \ref{alg:selection} satisfies Definition $1$ and Lemma $5$ in \cite{Sar06}.
\begin{lemma} \label{lem:sarlos}
Assume that the matrix $R$ is constructed by using Algorithm~\ref{alg:selection} with inputs $A$, $k$ and $\eps$.
\begin{enumerate}
    \item Singular Values Preservation: For all $i \in [k]$ and w.p. at least $0.99$,
    $$ |1 - \sigma_i(V_k^\top R)| \leq \eps.$$
    \item Matrix Multiplication: For any
    two matrices $S \in \RR^{n \times d}$  and $T \in \RR^{d \times
    k}$,
    $$ \EE{ \frobnorm{ ST - S R R^\top T}^2   } \leq \frac{2}{t} \frobnorm{S}^2 \frobnorm{T}^2. $$
    \item Moments: For any
    $C \in \RR^{n \times d}$: $ \EE{\frobnorm{CR}^2} = \frobnorm{C}^2$ and $\var{\frobnorm{CR}} \leq  2 \frobnorm{C}^4 / t.$
\end{enumerate}
\end{lemma}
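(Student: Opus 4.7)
The plan is to verify each of the three claims by direct moment calculations on the $\pm 1/\sqrt{t}$ entries of $R$, which are mutually independent with $\EE{R_{ij}} = 0$, $\EE{R_{ij}^2} = 1/t$, and $\EE{R_{ij}^4} = 1/t^2$.

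For the moment claim (Part 3), I would write $\frobnorm{CR}^2 = \Tr{CRR^\top C^\top}$. Linearity of expectation plus $\EE{R_{aj}R_{bj}} = \delta_{ab}/t$ immediately gives $\EE{RR^\top} = I_d$ and hence $\EE{\frobnorm{CR}^2} = \Tr{CC^\top} = \frobnorm{C}^2$. For the variance, I would expand $\EE{\frobnorm{CR}^4} = \EE{(\sum_{i,j}(CR)_{ij}^2)^2}$ and separate the same-column sums ($j=j'$) from the independent cross-column sums ($j\neq j'$). The $j\neq j'$ terms contribute exactly $\EE{\frobnorm{CR}^2}^2$, while the $j=j'$ diagonal terms, after invoking the 4-wise index-pairing identity $\EE{R_{aj}R_{bj}R_{cj}R_{dj}}\in\{1/t^2,0\}$, contribute at most $2\frobnorm{C}^4/t$. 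Subtracting $\EE{\frobnorm{CR}^2}^2$ yields the claimed variance bound.

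For the approximate-matrix-multiplication claim (Part 2), I would write $SRR^\top T = \sum_{m=1}^t Sr_m r_m^\top T$, where $r_m$ denotes the $m$-th column of $R$. Since $\EE{r_m r_m^\top} = (1/t) I_d$, the $t$ summands are i.i.d.\ with mean $(1/t)ST$, so $\EE{SRR^\top T} = ST$. By independence and zero-meanness of the centered summands, the expected squared Frobenius error decouples: $\EE{\frobnorm{SRR^\top T - ST}^2} = t\cdot\EE{\frobnorm{Sr_1 r_1^\top T - (1/t)ST}^2} \leq t\cdot\EE{\frobnorm{Sr_1 r_1^\top T}^2}$. The rank-$1$ identity $\frobnorm{uv^\top}^2 = \norm{u}^2\norm{v}^2$ reduces each summand to $\EE{\norm{Sr_1}^2\,\norm{T^\top r_1}^2}$, and a short fourth-moment expansion over the entries of $r_1$ (pairing indices as in Part 3) bounds this by $(2/t^2)\frobnorm{S}^2\frobnorm{T}^2$. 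Multiplying by $t$ gives the $(2/t)\frobnorm{S}^2\frobnorm{T}^2$ bound.

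The main obstacle is Part 1 (singular value preservation for $V_k^\top R$), which requires a uniform, not pointwise, concentration statement. Since $V_k$ has orthonormal columns, the claim $|1-\sigma_i(V_k^\top R)|\leq \eps$ for all $i\in[k]$ is equivalent to $\norm{V_k^\top RR^\top V_k - I_k}\leq O(\eps)$, i.e.\ to $|\norm{R^\top V_k y}^2 - 1|\leq O(\eps)$ uniformly over the unit sphere $S^{k-1}\subset\RR^k$. My plan is the standard net-plus-JL argument: (i) for each fixed unit $y$, the vector $V_k y$ has unit Euclidean norm in $\RR^d$, so Theorem $1.1$ of~\cite{Ach03} gives $|\norm{R^\top(V_k y)}^2 - 1|\leq \eps/2$ with failure probability at most $2\exp(-c\eps^2 t)$ for an absolute constant $c>0$; (ii) fix a $1/4$-net $\mathcal{N}\subset S^{k-1}$ of size at most $9^k$ and union-bound, yielding overall failure probability at most $2\cdot 9^k\exp(-c\eps^2 t)\leq 0.01$ provided $t=\Omega(k/\eps^2)$ with a sufficiently large constant; (iii) pass from $\mathcal{N}$ to all of $S^{k-1}$ via the standard inequality $\sup_{y\in S^{k-1}}|y^\top M y|\leq 2\sup_{y\in\mathcal{N}}|y^\top M y|$ for symmetric $M = V_k^\top RR^\top V_k - I_k$ and a $1/4$-net, which loses only a factor of $2$. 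The only subtlety is bookkeeping constants: the exponent $k\log 9$ from the net size must be absorbed by the JL exponent $c\eps^2 t$, which is exactly what the $t=\Omega(k/\eps^2)$ sizing affords.
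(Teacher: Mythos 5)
Your proposal is correct in outline but takes a genuinely different route from the paper: the paper does not actually prove Lemma~\ref{lem:sarlos}. It imports the three statements wholesale from Sarl\'os (Lemma~6, Lemma~8, and Corollary~11 of \cite{Sar06}) and spends its one paragraph of justification checking the hypothesis, namely that the rescaled sign matrix $R$ satisfies the Johnson--Lindenstrauss property required by Sarl\'os' Definition~1 and Lemma~5, which is exactly Theorem~1.1 of \cite{Ach03}. Your plan replaces that citation chain with self-contained arguments: direct second- and fourth-moment computations over the independent $\pm 1/\sqrt{t}$ entries for parts~2 and~3, and a net-plus-union-bound argument over $S^{k-1}$ for part~1. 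What the paper's route buys is brevity and modularity (it explicitly remarks that any ``random projection matrix'' respecting Lemma~\ref{lem:sarlos} can be substituted); what your route buys is a proof that does not require unpacking another paper's definitions and that makes visible exactly which moment properties of $R$ are used where. Your part~1 is the standard argument underlying Sarl\'os' subspace embedding and is sound: $|1-\sigma_i(V_k^\top R)|\leq\eps$ for all $i\in[k]$ follows from $\norm{V_k^\top RR^\top V_k-I_k}\leq\eps$, the $1/4$-net of size $9^k$ costs a factor of $2$ plus an exponent $k\log 9$ in the union bound, and $t=\Omega(k/\eps^2)$ with a large enough constant absorbs it.

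One bookkeeping slip in part~2: after discarding the centering term you bound the error by $t\cdot\EE{\frobnorm{Sr_1r_1^\top T}^2}$ and claim the single-column second moment is at most $(2/t^2)\frobnorm{S}^2\frobnorm{T}^2$. The sharp constant for that \emph{uncentered} moment is $3$, not $2$: taking $S=uv^\top$ and $T=vw^\top$ with $v$ the normalized all-ones vector gives $\EE{\norm{Sr_1}^2\norm{T^\top r_1}^2}=t^{-2}\,\EE{(v^\top\epsilon)^4}\to 3t^{-2}$. To land on the stated $2/t$ you must retain the subtracted term $\frobnorm{(1/t)ST}^2$; the exact variance of one summand is $t^{-2}\left(\frobnorm{S}^2\frobnorm{T}^2+\frobnorm{ST}^2-2\sum_a (S^\top S)_{aa}(TT^\top)_{aa}\right)\leq 2t^{-2}\frobnorm{S}^2\frobnorm{T}^2$. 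This is immaterial downstream, since only the $O(1/t)$ rate is used, but as written that intermediate inequality is false.
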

The first statement above assumes $c$ being sufficiently large (see step $1$ of Algorithm~\ref{alg:selection}). We continue with several novel results of general interest.
\begin{lemma} \label{lem:sigma_bound}
Under the same assumptions as in Lemma \ref{lem:sarlos} and w.p. at least $0.99$,
\begin{equation}\label{ineq:pseudo_inv_transpose}
\norm{\pinv{(V_k^\top R)} - (V_k^\top R)^\top }\  \leq\ 3 \eps.
\end{equation}
\end{lemma}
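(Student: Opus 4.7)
Set $M = V_k^\top R \in \RR^{k \times t}$. The plan is to condition on the high-probability event from part 1 of Lemma~\ref{lem:sarlos}, which states that $|1-\sigma_i(M)| \leq \eps$ for all $i \in [k]$ with probability at least $0.99$. In particular, since $\eps \in (0,1/3)$, every singular value of $M$ is strictly positive, so $M$ has full row rank $k$ and its pseudoinverse has a clean description via the SVD.

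Next I would write a thin SVD $M = U\Sigma W^\top$, where $U \in \RR^{k \times k}$ is orthogonal, $\Sigma \in \RR^{k \times k}$ is diagonal with positive entries $\sigma_i = \sigma_i(M)$, and $W \in \RR^{t \times k}$ has orthonormal columns. Then $\pinv{M} = W \Sigma^{-1} U^\top$ while $M^\top = W \Sigma U^\top$, which immediately gives the identity
\begin{equation*}
\pinv{M} - M^\top \;=\; W\,(\Sigma^{-1} - \Sigma)\,U^\top.
\end{equation*}
Since $W$ has orthonormal columns and $U$ is orthogonal, taking spectral norms yields $\norm{\pinv{M} - M^\top} = \norm{\Sigma^{-1} - \Sigma} = \max_{i \in [k]} |\sigma_i^{-1} - \sigma_i|$.

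It then remains to verify the scalar inequality $|\sigma^{-1} - \sigma| \leq 3\eps$ uniformly over $\sigma \in [1-\eps, 1+\eps]$ when $\eps \leq 1/3$. Writing $|\sigma^{-1} - \sigma| = |1-\sigma^2|/\sigma = |1-\sigma|(1+\sigma)/\sigma$ and splitting into the two cases $\sigma \geq 1$ and $\sigma \leq 1$: in the first case one obtains $(2\eps + \eps^2)/(1+\eps) \leq 2\eps$; in the second, $(2\eps - \eps^2)/(1-\eps) \leq \eps(2-\eps)/(1-\eps)$, which is bounded by $3\eps$ precisely when $\eps \leq 1/2$. Since we assume $\eps \leq 1/3$, both cases give at most $3\eps$, completing the proof.

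There is essentially no conceptual obstacle here: the only thing to be careful about is the rank assertion (which is what forces the use of $\eps < 1$ and lets us treat $\Sigma$ as invertible), and the scalar inequality at the end, where the constant $3$ is tight enough that one has to be a little careful about the $\eps \leq 1/3$ hypothesis rather than just $\eps \leq 1$. The probability bound $0.99$ is inherited verbatim from Lemma~\ref{lem:sarlos}, since that is the only random event invoked.
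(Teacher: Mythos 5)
Your proof is correct and takes essentially the same route as the paper: both reduce $\norm{\pinv{(V_k^\top R)} - (V_k^\top R)^\top}$ via the SVD to $\max_i |1-\sigma_i^2|/\sigma_i$ and then invoke the singular-value bound from Lemma~\ref{lem:sarlos}. The only difference is that you spell out the final scalar estimate (including the observation that the constant $3$ actually requires $\eps \leq 1/2$, comfortably satisfied by $\eps \leq 1/3$), which the paper dismisses as ``elementary calculations.''
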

\begin{proof}
Let $\Phi = V_k^\top R$; note that $\Phi$ is a $k \times t$ matrix
and the $SVD$ of $\Phi$ is $\Phi = U_{\Phi} \Sigma_{\Phi}
V_\Phi^\top $, where $U_{\Phi}$ and $\Sigma_{\Phi}$ are $k \times
k$ matrices, and $V_\Phi$ is a $t \times k$ matrix. By taking the
SVD of $\pinv{(V_k^\top R)}$ and $(V_k^\top R)^\top $ we get
\[ \norm{\pinv{(V_k^\top R)} - (V_k^\top R)^\top }\ =\ \norm{ V_{\Phi} \Sigma_{\Phi}^{-1} U_{\Phi}^\top  - V_{\Phi}
\Sigma_{\Phi} U_{\Phi}^\top }\ =\ \norm{ V_{\Phi}(\Sigma_{\Phi}^{-1} - \Sigma_{\Phi}) U_{\Phi}^\top }\ =\ \norm{
\Sigma_{\Phi}^{-1} - \Sigma_{\Phi} },\] since $V_{\Phi}$ and
$U_{\Phi}^\top $ can be dropped without changing any unitarily
invariant norm. Let $\Psi = \Sigma_{\Phi}^{-1} - \Sigma_{\Phi}$;
$\Psi$ is a $k \times k$ diagonal matrix. Assuming that, for all
$i \in [k]$, $\sigma_i(\Phi)$ and $\tau_i(\Psi)$ denote the $i$-th
largest singular value of $\Phi$ and the $i$-th diagonal element
of $\Psi$, respectively, it is
\[ \tau_i(\Psi)\  =\ \frac{ 1 - \sigma_i^2(\Phi)  }{ \sigma_{i}(\Phi) }.\]
Since $\Psi$ is a diagonal matrix,
\[\norm{ \Psi }\ =\ \max_{1 \leq i \leq k} \tau_i(\Psi)\ =\ \max_{1 \leq i \leq k} \frac{ 1 - \sigma_i^2(\Phi)}{ \sigma_{i}(\Phi) }.\]
The first statement of Lemma \ref{lem:sarlos}, our choice of $\eps \in (0,1/3)$, and elementary calculations suffice to conclude the proof.
\end{proof}
\begin{lemma} \label{lem:chebyshev_bound}
Under the same assumptions as in Lemma \ref{lem:sarlos} and for
any $n\times d$ matrix $C$ w.p. at least $0.99$,
\begin{equation}\label{ineq:frob_bound}
\frobnorm{ C R }\ \leq\ \sqrt{(1+\eps)} \frobnorm{C}.
\end{equation}
\end{lemma}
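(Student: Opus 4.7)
The plan is to apply Chebyshev's inequality directly to the squared Frobenius norm $\frobnorm{CR}^2$, which is already set up by the moment estimates in Lemma~\ref{lem:sarlos}(3). The target inequality $\frobnorm{CR} \leq \sqrt{1+\eps}\frobnorm{C}$ is equivalent (after squaring) to $\frobnorm{CR}^2 \leq (1+\eps)\frobnorm{C}^2$, so it suffices to control the deviation of $\frobnorm{CR}^2$ above its mean $\frobnorm{C}^2$.

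First, I would read off from Lemma~\ref{lem:sarlos}(3) that $\EE{\frobnorm{CR}^2} = \frobnorm{C}^2$ and that the variance of $\frobnorm{CR}^2$ is at most $2\frobnorm{C}^4/t$ (the dimensions of the right-hand side force the bound to be on the squared norm rather than on $\frobnorm{CR}$ itself, consistent with Sarl\'os' original statement). Then Chebyshev's inequality gives
\begin{equation*}
\Probab{\frobnorm{CR}^2 - \frobnorm{C}^2 \geq \eps \frobnorm{C}^2} \;\leq\; \frac{\var{\frobnorm{CR}^2}}{\eps^2 \frobnorm{C}^4} \;\leq\; \frac{2}{\eps^2\, t}.
\end{equation*}

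Next, I would use the choice $t = t_o \geq c k/\eps^2$ from step~1 of Algorithm~\ref{alg:selection}. Since $k \geq 1$, we have $2/(\eps^2 t) \leq 2/c$, and taking the absolute constant $c$ to be sufficiently large (say $c \geq 200$, which is compatible with the other constant requirements already imposed by Lemma~\ref{lem:sarlos}) drives the failure probability below $0.01$. Taking square roots of the complementary event yields $\frobnorm{CR} \leq \sqrt{1+\eps}\,\frobnorm{C}$ with probability at least $0.99$, as claimed.

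There is essentially no serious obstacle here: the heavy lifting — identifying $\frobnorm{CR}^2$ as an unbiased estimator of $\frobnorm{C}^2$ with second-moment control of order $1/t$ — has already been packaged in Lemma~\ref{lem:sarlos}(3). The only care required is (i) confirming that the variance bound applies to the squared Frobenius norm, and (ii) making the generic ``sufficiently large'' constant $c$ large enough to absorb the factor of $2$ from the numerator and the $0.01$ failure budget, simultaneously with the analogous requirements of Lemma~\ref{lem:sarlos}(1) and Lemma~\ref{lem:sigma_bound}.
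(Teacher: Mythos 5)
Your proposal is correct and follows essentially the same route as the paper: apply Chebyshev's inequality to $Z = \frobnorm{CR}^2$ using the moment bounds of Lemma~\ref{lem:sarlos}(3), use $t \geq ck/\eps^2$ together with $k\geq 1$ to drive the failure probability below $0.01$ for $c$ sufficiently large, and take square roots. Your parenthetical observation that the variance bound must be read as applying to the squared Frobenius norm (despite the notation in the lemma statement) is also the correct reading and matches how the paper uses it.
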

\begin{proof}
Notice that there exists a sufficiently large constant $c$ such that $t \geq c k / \eps^2$.
Then, setting $Z = \frobnorm{ C R }^2$, using the third statement of
Lemma \ref{lem:sarlos}, the fact that $k \geq 1$, and Chebyshev's
inequality we get
\begin{eqnarray*}
\Prob{ |Z - \EE{Z }| \geq \eps \frobnorm{C}^2 }\ \leq\ \frac{\var{Z}}{\eps^2 \frobnorm{C}^4}\
    \leq\ \frac{2\frobnorm{C}^4}{t\eps^2 \frobnorm{C}^4}\ \leq\ \frac{2}{ck} \leq\ 0.01.
\end{eqnarray*}
The last inequality follows assuming $c$ sufficiently large.
Finally, taking square root on both sides concludes the proof.
\end{proof}
\begin{lemma} \label{lem:decomposition}
Under the same assumptions as in Lemma \ref{lem:sarlos} and w.p.
at least $0.97$,
\begin{equation}\label{eq:matrix_decomp}
A_k  = (AR) \pinv{(V_k^\top  R)}V_k^\top  + E,
\end{equation}
where $E$ is an $n \times d $ matrix with $\frobnorm{E} \leq 4
\eps \frobnorm{A-A_k}$.
\end{lemma}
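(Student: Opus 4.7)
The plan is to decompose $AR$ via the SVD of $A_k$ and to exploit the orthogonality $A_{\rho-k} V_k = 0$. Set $\Phi := V_k^\top R$ and write $AR = U_k \Sigma_k \Phi + A_{\rho-k} R$. By the first statement of Lemma~\ref{lem:sarlos}, with probability at least $0.99$ every singular value of $\Phi$ lies in $(1-\eps,1+\eps)$, so $\Phi$ has full row rank $k$ and therefore $\Phi \pinv{\Phi} = I_k$. Substituting,
\[
(AR)\pinv{\Phi}V_k^\top \;=\; U_k\Sigma_k V_k^\top + A_{\rho-k} R \pinv{\Phi} V_k^\top \;=\; A_k + A_{\rho-k} R \pinv{\Phi} V_k^\top,
\]
so $E = -A_{\rho-k} R \pinv{\Phi} V_k^\top$, and the problem reduces to bounding $\frobnorm{A_{\rho-k} R \pinv{\Phi} V_k^\top}$ by $4\eps\,\frobnorm{A - A_k}$.

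The key trick is to replace $\pinv{\Phi}$ by its ``near-orthonormal'' counterpart $\Phi^\top$. Writing $\pinv{\Phi} = \Phi^\top + (\pinv{\Phi} - \Phi^\top)$ and applying the triangle inequality yields
\[
\frobnorm{E} \;\leq\; \frobnorm{A_{\rho-k} R R^\top V_k V_k^\top} + \frobnorm{A_{\rho-k} R (\pinv{\Phi} - \Phi^\top) V_k^\top}.
\]
For the first summand, $V_k^\top V_k = I_k$ lets one drop the trailing $V_k^\top$ at no Frobenius cost, and since $A_{\rho-k} V_k = 0$ the term equals $\frobnorm{A_{\rho-k} V_k - A_{\rho-k} R R^\top V_k}$, which is precisely the random-projection matrix-multiplication error for the product $A_{\rho-k} V_k$. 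Part 2 of Lemma~\ref{lem:sarlos} with $S = A_{\rho-k}$, $T = V_k$, combined with Markov's inequality and $\frobnorm{V_k}^2 = k$, gives w.p.\ at least $0.99$ a bound of order $\sqrt{k/t}\,\frobnorm{A - A_k} = O(\eps/\sqrt{c})\,\frobnorm{A - A_k}$, which shrinks below any desired multiple of $\eps\,\frobnorm{A - A_k}$ once the constant $c$ in step one of Algorithm~\ref{alg:selection} is taken large enough. For the second summand, strong submultiplicativity and $\norm{V_k^\top} = 1$ give
\[
\frobnorm{A_{\rho-k} R (\pinv{\Phi} - \Phi^\top) V_k^\top} \;\leq\; \frobnorm{A_{\rho-k} R}\,\norm{\pinv{\Phi} - \Phi^\top},
\]
which is at most $\sqrt{1+\eps}\,\frobnorm{A - A_k} \cdot 3\eps$ by Lemmas~\ref{lem:chebyshev_bound} and~\ref{lem:sigma_bound}. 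Using $\eps < 1/3$ this is at most $2\sqrt{3}\,\eps\,\frobnorm{A - A_k}$, and adding in the first summand keeps the total comfortably below $4\eps\,\frobnorm{A - A_k}$. A union bound over the three independent $0.99$-events (the singular-value bound underlying Lemma~\ref{lem:sigma_bound}, Lemma~\ref{lem:chebyshev_bound}, and the Markov step) yields failure probability at most $0.03$, as required.

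The step I expect to be the main obstacle is controlling the first summand: even though $A_{\rho-k} V_k = 0$ exactly, the matrix $RR^\top$ is far from the identity, so the quantity $A_{\rho-k} R R^\top V_k V_k^\top$ does not vanish on the nose. The conceptual move is to reinterpret it as the approximate-matrix-multiplication error for a product that happens to be zero; once that is recognized, part 2 of Lemma~\ref{lem:sarlos} together with Markov supplies the $\eps$-factor automatically, and the rest is bookkeeping.
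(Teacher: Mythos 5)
Your proposal is correct and follows essentially the same route as the paper's proof: you show the $A_k$ contribution vanishes because $\Phi\pinv{\Phi}=I_k$ under the singular-value event, split $\pinv{\Phi}$ as $\Phi^\top + (\pinv{\Phi}-\Phi^\top)$, bound the first piece via the matrix-multiplication statement of Lemma~\ref{lem:sarlos} with $A_{\rho-k}V_k=\mathbf{0}$ plus Markov, and the second via Lemmas~\ref{lem:chebyshev_bound} and~\ref{lem:sigma_bound}, finishing with a union bound. The only cosmetic differences are that the paper fixes the first summand's bound at $0.5\eps\frobnorm{A_{\rho-k}}$ rather than leaving it as $O(\eps/\sqrt{c})$, and the events need not be independent for the union bound (which you do not actually use).
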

\begin{proof}
Since $(AR) \pinv{(V_k^\top  R)}V_k^\top $ is an $n \times d$
matrix, let us write $E = A_k  - (AR) \pinv{(V_k^\top  R)}V_k^\top
$. Then, setting $A = A_k + A_{\rho-k}$, and using the triangle
inequality we get
\[ \frobnorm{E}\ \leq\ \frobnorm{A_k - A_kR \pinv{(V_k^\top R)} V_k^\top }\ +\ \frobnorm{ A_{\rho-k}R \pinv{(V_k^\top R)} V_k^\top  }.\]
The first statement of Lemma \ref{lem:sarlos} implies that
$\text{rank}(V_k^\top R) = k$ thus $(V_k^\top R) \pinv{(V_k^\top R)} =
I_k$, where $I_k$ is the $k \times k$ identity matrix. Replacing
$A_k = U_k \Sigma_k V_k^\top $ and setting $(V_k^\top R)
\pinv{(V_k^\top R)} = I_k$ we get that
\[ \frobnorm{A_k - A_kR \pinv{(V_k^\top R)} V_k^\top }\ =\ \frobnorm{A_k - U_k\Sigma_k V_k^\top R \pinv{(V_k^\top R)} V_k^\top }\ =\ \frobnorm{A_k - U_k \Sigma_k V_k^\top }\ =\ 0. \]
To bound the second term above, we drop $V_k^\top $, add and
subtract the matrix $A_{\rho-k}R (V_k^\top R)^\top  V_k^\top $,
and use the triangle inequality and submultiplicativity:
\begin{eqnarray*}
\frobnorm{ A_{\rho-k} R\pinv{(V_k^\top R)} V_k^\top  } & \leq & \frobnorm{ A_{\rho-k}R (V_k^\top R)^\top  }\ +\ \frobnorm{ A_{\rho-k}R ( \pinv{(V_k^\top R)} - (V_k^\top R)^\top )} \\
                                               & \leq & \frobnorm{ A_{\rho-k}R R^\top  V_k  }\ +\ \frobnorm{ A_{\rho-k}R} \norm{ \pinv{(V_k^\top R)} - (V_k^\top R)^\top
                                               }.
\end{eqnarray*}
Now we will bound each term individually. A crucial observation
for bounding the first term is that $A_{\rho-k}V_k=
U_{\rho-k}\Sigma_{\rho-k}V_{\rho-k}^{\top}V_k=\mathbf{0}$ by
orthogonality of the columns of $V_k$ and $V_{\rho-k}$. This term
now can be bounded using the second statement of Lemma
\ref{lem:sarlos} with $S = A_{\rho - k}$ and $T = V_k$. This statement, assuming $c$ sufficiently large, and an application of Markov's inequality on the random variable
$\frobnorm{A_{\rho-k}R R^\top  V_k  - A_{\rho-k}V_k}$ give that
w.p. at least $0.99$,
\begin{eqnarray}
    \frobnorm{A_{\rho-k}R R^\top  V_k }\ \leq\ 0.5 \eps \frobnorm{A_{\rho-k}}. \label{ineq:mm_frob}
\end{eqnarray}
The second two terms can be bounded using
Lemma~\ref{lem:sigma_bound} and Lemma~\ref{lem:chebyshev_bound} on
$C=A_{\rho-k}$. Hence by applying a union bound on
Lemma~\ref{lem:sigma_bound}, Lemma~\ref{lem:chebyshev_bound} and
Inq.~\eqref{ineq:mm_frob}, we get that w.p. at least $0.97$,
\begin{eqnarray*}
\frobnorm{ E }  & \leq & \frobnorm{ A_{\rho-k}R R^\top  V_k  } + \frobnorm{ A_{\rho-k}R} \norm{ \pinv{(V_k^\top R)} - (V_k^\top R)^\top } \\
                           & \leq &  0.5 \eps \frobnorm{ A_{\rho-k}} + \sqrt{(1+\eps)} \frobnorm{A_{\rho-k}} \cdot 3 \eps \\
                           & \leq &  0.5 \eps \frobnorm{ A_{\rho-k}} +  3.5 \eps \frobnorm{A_{\rho-k}} \\
                           &  = &  4 \eps\cdot \frobnorm{A_{\rho-k}}.
\end{eqnarray*}
The last inequality holds thanks to our choice of $\eps \in
(0,1/3)$.
\end{proof}
\begin{proposition} \label{prop1}
A well-known property connects the SVD of a matrix and $k$-means
clustering. Recall Definition \ref{def:kmeans}, and notice that
$X_{opt}X_{opt}^\top A$ is a matrix of rank at most $k$. From the
SVD optimality we immediately get that 
\begin{equation}\label{eqn:svdcon}
\frobnorm{ A_{\rho-k} }^2\ =\ \frobnorm{ A - A_k }^2\ \leq\ \frobnorm{
A - X_{opt}X_{opt}^\top A }^2.
\end{equation}
\end{proposition}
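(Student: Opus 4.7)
The plan is to reduce the statement to the Eckart--Young--Mirsky theorem, which says that $A_k = U_k \Sigma_k V_k^\top$ is the best rank-$k$ approximation of $A$ in Frobenius norm, i.e.\ $\frobnorm{A - A_k} \leq \frobnorm{A - B}$ for any $B \in \RR^{n\times d}$ with $\rank{B} \leq k$. Since $\frobnorm{A - A_k}^2 = \sum_{i=k+1}^{\rho} \sigma_i(A)^2 = \frobnorm{A_{\rho-k}}^2$, the left-hand equality is immediate, and all that is left is to exhibit $B = X_{opt} X_{opt}^\top A$ as a rank-$\leq k$ matrix.

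First I would observe that $X_{opt} \in \RR^{n\times k}$, so $X_{opt} X_{opt}^\top A$ factors through a $k$-dimensional space, and hence $\rank{X_{opt} X_{opt}^\top A} \leq k$. Concretely, from the structure of an indicator matrix as defined in the paper, $X_{opt}^\top X_{opt} = I_k$, so $X_{opt} X_{opt}^\top$ is an orthogonal projector onto a $k$-dimensional subspace of $\RR^n$ (the column space of $X_{opt}$). Applying this projector to $A$ yields a matrix of rank at most $k$.

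With this observation in hand, the Eckart--Young--Mirsky theorem applied to the rank-$k$ matrix $B = X_{opt} X_{opt}^\top A$ gives
\[
\frobnorm{A - A_k}^2 \;\leq\; \frobnorm{A - X_{opt} X_{opt}^\top A}^2,
\]
which, combined with $\frobnorm{A - A_k}^2 = \frobnorm{A_{\rho-k}}^2$, yields the displayed inequality \eqref{eqn:svdcon}.

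There is no real obstacle here; the only thing to be careful about is the explicit use of the fact that $X_{opt} X_{opt}^\top$ is a rank-$k$ projector (guaranteed by $X_{opt}^\top X_{opt} = I_k$, as noted just after Definition~\ref{def:kmeans}), so that the Eckart--Young--Mirsky theorem applies. This is a very short argument and essentially amounts to stating the SVD-optimality principle that the authors already invoke in the statement of the proposition.
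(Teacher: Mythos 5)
Your proposal is correct and follows essentially the same route as the paper: observe that $X_{opt}X_{opt}^\top A$ factors through $\RR^k$ (hence has rank at most $k$) and invoke the Eckart--Young--Mirsky optimality of $A_k$ in Frobenius norm. Your additional remark that $X_{opt}^\top X_{opt} = I_k$ makes $X_{opt}X_{opt}^\top$ an orthogonal projector is harmless but not needed for the rank bound.
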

\subsection{The proof of Eqn.~\eqref{eqn:main2} of Theorem~\ref{thm:second_result}} \label{sxn:proof3} 
%
%
%
We start by manipulating the term $\frobnorm{A -
X_{\tilde{\gamma}} X_{\tilde{\gamma}}^\top  A}^2$ in
Eqn.~\eqref{eqn:main2}. Replacing $A$ by $A_k + A_{\rho-k}$, and
using the Pythagorean theorem (the subspaces spanned by the
components $A_k - X_{\tilde{\gamma}} X_{\tilde{\gamma}}^\top A_k$
and $A_{\rho-k}- X_{\tilde{\gamma}} X_{\tilde{\gamma}}^\top
A_{\rho-k}$ are perpendicular) we get
\begin{eqnarray}\label{eqn:f1}
\frobnorm{A - X_{\tilde{\gamma}} X_{\tilde{\gamma}}^\top A}^2\ =\ \underbrace{\frobnorm{(I -
X_{\tilde{\gamma}} X_{\tilde{\gamma}}^\top ) A_k}^2}_{\theta_1^2}\ +\ \underbrace{\frobnorm{(I - X_{\tilde{\gamma}}
X_{\tilde{\gamma}}^\top )A_{\rho-k}}^2}_{\theta_2^2}.
\end{eqnarray}
We first bound the second term of Eqn.~\eqref{eqn:f1}. Since
$I-X_{\tilde{\gamma}}X_{\tilde{\gamma}}^\top $ is a projector
matrix, it can be dropped without increasing a unitarily invariant
norm. Now Proposition \ref{prop1} implies that
\begin{eqnarray} \label{eqn:f2}
\theta_2^2\ \leq\ \frobnorm{A_{\rho-k}}^2\ \leq\ \frobnorm{ A -
X_{opt}X_{opt}^\top A }^2.
\end{eqnarray}
We now bound the first term of Eqn.~\eqref{eqn:f1}:
\begin{eqnarray}
\label{t0} \theta_1
&\leq&  \frobnorm{(I -X_{\tilde{\gamma}}X_{\tilde{\gamma}}^\top )AR\pinv{(V_kR)}V_k^\top }\ +\ \frobnorm{E} \\
\label{t1}
&\leq&  \frobnorm{(I -X_{\tilde{\gamma}}X_{\tilde{\gamma}}^\top )AR} \norm{\pinv{(V_kR)}}\ +\ \frobnorm{E}\\
\label{t2}
&\leq& \sqrt{\gamma} \frobnorm{(I -X_{opt}X_{opt}^\top )AR} \norm{\pinv{(V_kR)}}\  +\ \frobnorm{E} \\
\label{t3}
&\leq& \sqrt{\gamma}  \sqrt{(1+\eps)} \frobnorm{(I -X_{opt}X_{opt}^\top )A}  \frac{1}{1-\eps}\ +\ 4 \eps \frobnorm{(I -X_{opt}X_{opt}^\top )A} \\
\label{t4}
&\leq& \sqrt{\gamma}  (1+2.5 \eps) \frobnorm{(I -X_{opt}X_{opt}^\top )A}\ + \sqrt{\gamma} \ 4 \eps \frobnorm{(I -X_{opt}X_{opt}^\top )A}\\
\label{t5}
&\leq& \sqrt{\gamma} ( 1 + 6.5 \eps) \frobnorm{(I
-X_{opt}X_{opt}^\top )A}
\end{eqnarray}
In Eqn.~\eqref{t0} we used Lemma \ref{lem:decomposition}, the
triangle inequality, and the fact that $I -
\tilde{X}_{\gamma}\tilde{X}_{\gamma}^\top $ is a projector matrix
and can be dropped without increasing a unitarily invariant norm.
In Eqn.~\eqref{t1} we used submultiplicativity (see Section
\ref{sxn:notation}) and the fact that $V_k^\top $ can be dropped
without changing the spectral norm. In Eqn.~\eqref{t2} we replaced
$X_{\tilde{\gamma}}$ by $X_{opt}$ and the factor $\sqrt{\gamma}$
appeared in the first term. To better understand this step, notice
that $X_{\tilde{\gamma}}$ gives a $\gamma$-approximation to the
optimal $k$-means clustering of the matrix $AR$, and any other $n
\times k$ indicator matrix (for example, the matrix $X_{opt}$)
satisfies
\begin{equation*}
\frobnorm{\left(I - X_{\tilde{\gamma}} X_{\tilde{\gamma}}^\top
\right) AR}^2 \leq\ \gamma\ \min_{X \in \cal{X}} \frobnorm{(I - X
X^\top ) AR}^2\ \leq \gamma \frobnorm{\left(I - X_{opt}
X_{opt}^\top \right) AR}^2.
\end{equation*}
In Eqn.~\eqref{t3} we used Lemma \ref{lem:chebyshev_bound} with $
C= (I - X_{opt} X_{opt}^\top )A$, Lemma \ref{lem:sigma_bound} and
Proposition \ref{prop1}. In Eqn.~\eqref{t4} we used the fact that
$\gamma \geq 1$ and that for any $\eps \in (0,1/3)$ it is
$(\sqrt{1+\eps})/(1-\eps) \leq 1+ 2.5\eps$. Taking squares in
Eqn.~\eqref{t5} we get
\[ \theta_1^2\ \leq\ \gamma ( 1 + 28 \eps ) \frobnorm{(I
-X_{opt}X_{opt}^\top )A}^2.\]
Finally, rescaling $\eps$ accordingly and applying the union bound on
Lemma \ref{lem:decomposition} and Definition \ref{def:approx} concludes the proof.
\section{Experiments}\label{sec:experiments}
This section describes an empirical evaluation of Algorithm \ref{alg:selection} on a face 
images collection. We implemented our algorithm in MatLab and compared it against other prominent dimensionality reduction techniques such as the Local Linear Embedding (LLE) algorithm and the Laplacian scores for feature selection. We ran all the experiments on a Mac machine with a dual core 2.26 Ghz processor and 4 GB of RAM.
Our empirical findings are very promising indicating that our algorithm and implementation could be very useful in real applications involving
clustering of large-scale data. 
\begin{figure}[tb]
\begin{center}
\includegraphics[width=0.33\textwidth]{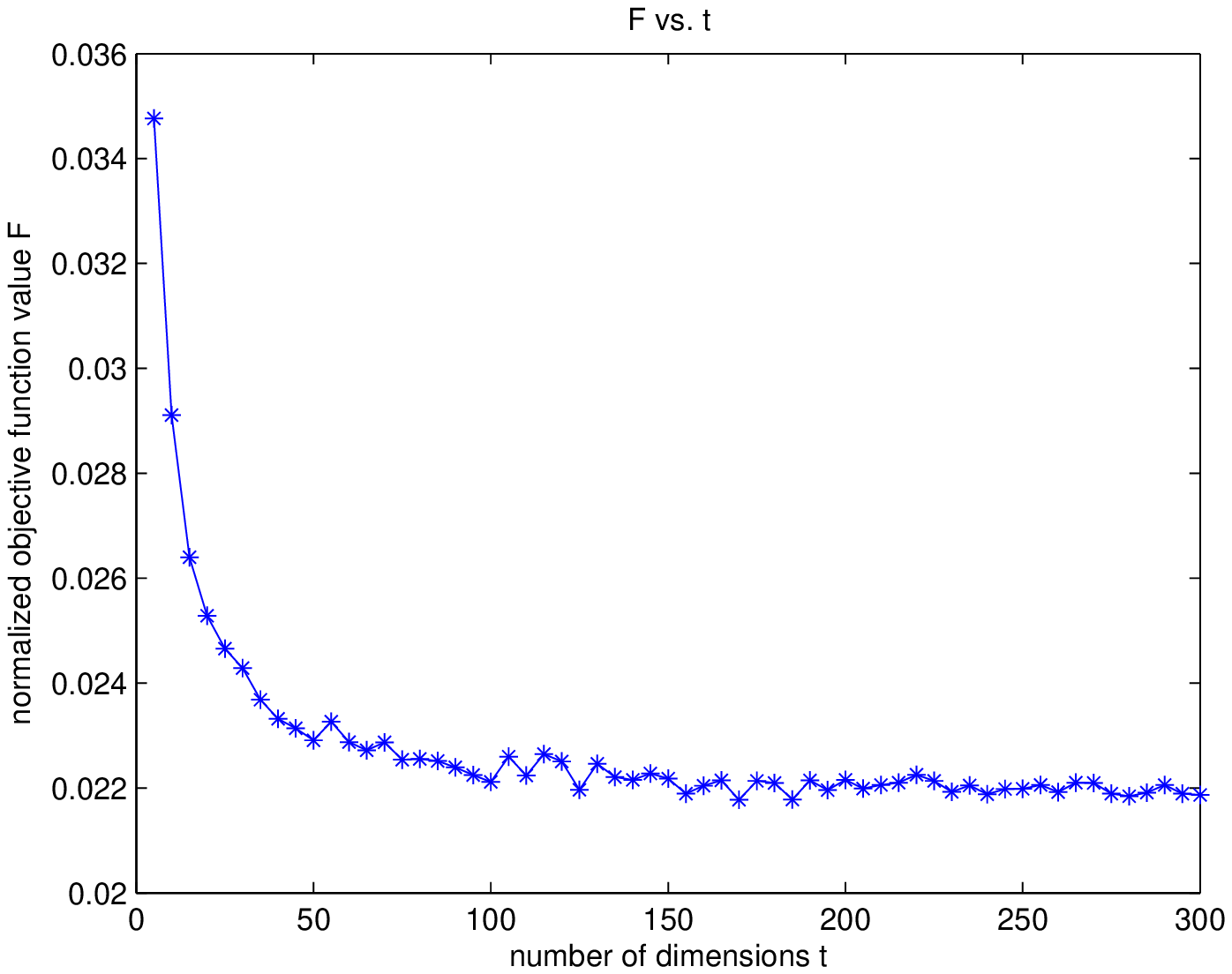}
\includegraphics[width=0.33\textwidth]{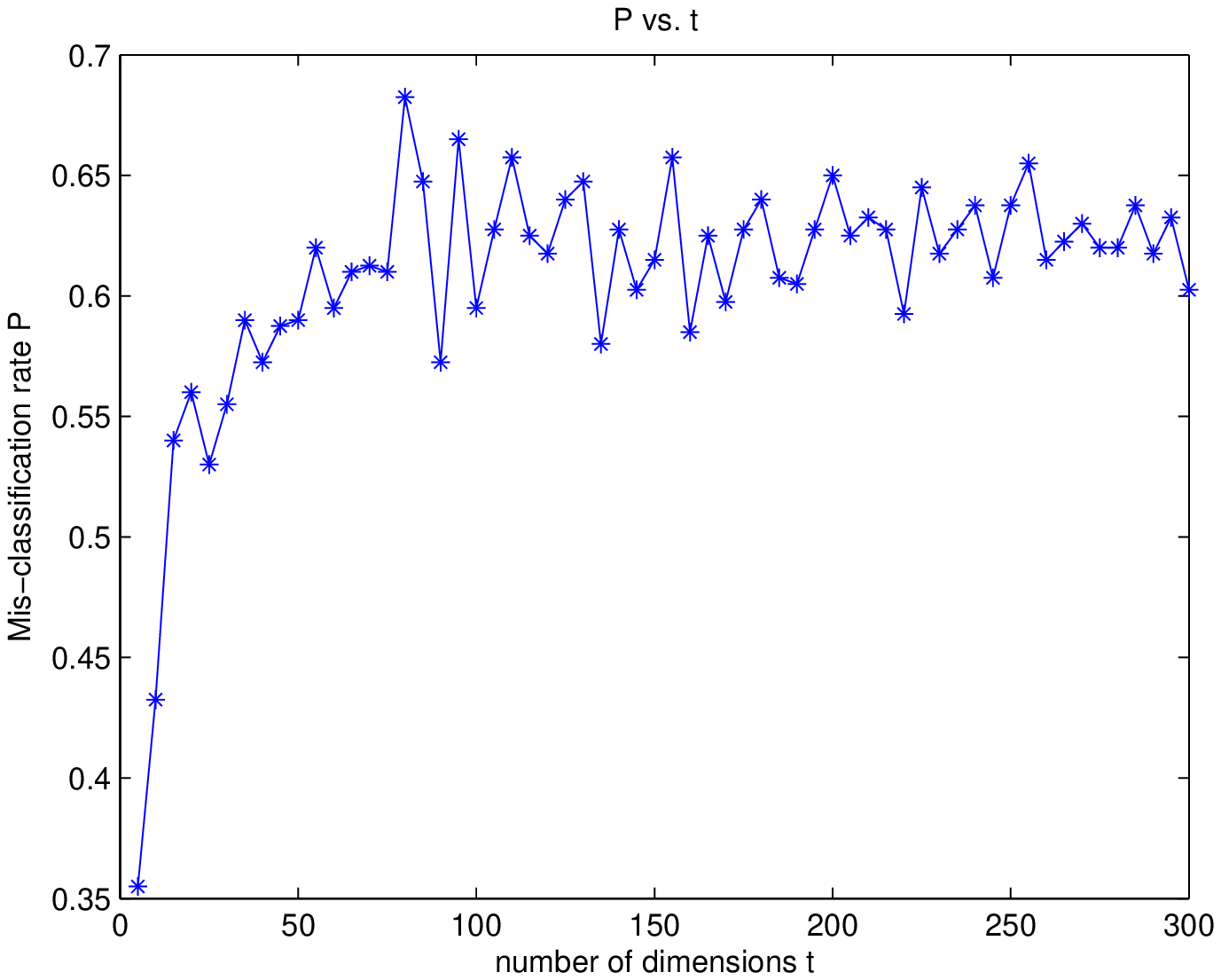}
\includegraphics[width=0.33\textwidth]{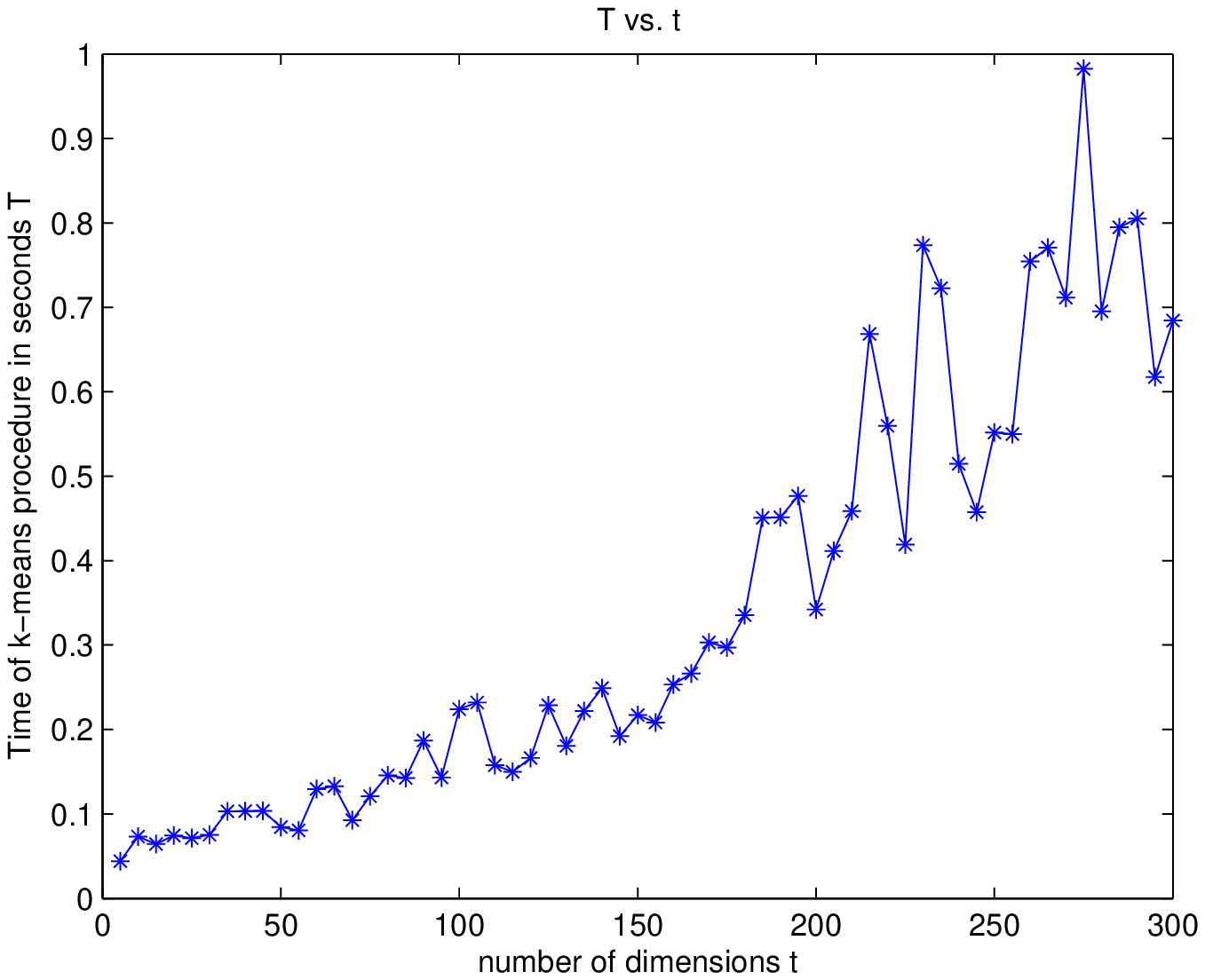}
\caption{The results of our experiments after running Algorithm 1 with $k=40$ on the face images collection.}
\label{fig:rp}
\end{center}
\end{figure}
\subsection{An application of Algorithm 1 on a face images collection}
We experiment with a face images collection. We downloaded the images
corresponding to the ORL database from \cite{www:data}. This collection
contains $400$ face images of dimensions $64 \times 64$ corresponding to $40$ different
people. These images form $40$ groups each one containing exactly $10$ 
different images of the same person.  After vectorizing each 2-D image and putting it as a row vector in an appropriate matrix, one can construct a $400 \times 4096$ image-by-pixel matrix $A$. In this matrix, objects are the face images 
of the ORL collection while features are the pixel values of the images. To apply the Lloyd's heuristic on $A$, 
we employ MatLab's function $kmeans$ with the parameter determining the maximum number of 
repetitions setting to $30$. We also
chose a deterministic initialization of the Lloyd's iterative E-M procedure,
i.e. whenever we call $kmeans$ with inputs a matrix 
$\tilde{A} \in R^{400 \times \tilde{d}}$, with $\tilde{d} \geq 1$, and the integer $k=40$, we 
initialize the cluster centers with the $1$-st, $11$-th,..., $391$-th rows of $\tilde{A}$, respectively.
Note that this initialization corresponds to picking images from the forty different 
groups of the available collection, since the images of every group are stored sequentially in $A$.
We evaluate the clustering outcome from two different perspectives. First, we
measure and report the objective function $F$ of the $k$-means clustering problem. In particular, we
report a normalized version of $F$, i.e. $\tilde{F} = F / || A ||_F^2$. Second, we report the mis-classification
accuracy of the clustering result. We denote this number by $P$ ($0 \leq P \leq 1$), where $P=0.9$, for
example, implies that $90 \%$ of the objects were assigned to the correct cluster after the application of the
clustering algorithm. In the sequel, we first perform experiments by running Algorithm \ref{alg:selection} with
everything fixed but $t$, which denotes the dimensionality of the projected data. Then, for four representative 
values of $t$, we compare Algorithm \ref{alg:selection} with three other dimensionality reduction methods as well with the approach of running the Lloyd's heuristic on the original high dimensional data. 

We run Algorithm \ref{alg:selection} with $t = 5, 10, ...,300$ and $k=40$ on the matrix $A$ described above. 
Figure \ref{fig:rp} depicts the 
results of our experiments. A few interesting observations are immediate. First, the normalized objective function
$\tilde{F}$ is a piece-wise non-increasing function of the number of dimensions $t$.  The decrease in $\tilde{F}$ is large in the first few choices of $t$; then, increasing the number of dimensions $t$ of the projected data decreases 
$\tilde{F}$ by a smaller value. The increase of $t$ seems to become irrelevant
after around $t=90$ dimensions. Second, the mis-classification rate $P$ is a
piece-wise non-decreasing function of $t$. The increase of $t$ seems to become
irrelevant again after around $t=90$ dimensions. Another interesting observation
of these two plots is that the mis-classification rate is not directly relevant
to the objective function $F$.  Notice, for example, that the two have different
behavior from $t=20$ to $t=25$ dimensions. Finally, we report the running time
$T$ of the algorithm which includes only the clustering step. Notice that the
increase in the running time is  - almost - linear with the increase of $t$. The
non-linearities in the plot are due to the fact that the number of iterations
that are necessary to guarantee convergence of the Lloyd's method are different for different values of $t$.
This observation indicates that small values of $t$ result to significant
computational savings, especially when $n$ is large. Compare, for example, the
one second running time that is needed to solve the $k$-means problem when $t=275$ against the $10$
seconds that are necessary to solve the problem on the high dimensional data. To
our benefit, in this case, the multiplication $A R$ takes only $0.1$ seconds
resulting to a total running time of $1.1$ seconds which corresponds to an
almost $90 \%$ speedup of the overall procedure.  

\begin{table}
\begin{center}
\begin{tabular}{|l|}
\hline
\multicolumn{1}{|l|}{} \\
\hline
\\
\hline
\textbf{SVD}  \\
\hline
\textbf{LLE}  \\
\hline
\textbf{LS}  \\
\hline
\textbf{HD}  \\
\hline
\textbf{RP}  \\
\hline
\end{tabular}
\begin{tabular}{|c|c|c|}
\hline
\multicolumn{2}{|c|}{\textbf{t = 10}} \\
\hline
$P$              &  $F$       \\
\hline
0.5900      &  0.0262 \\
\hline
0.6500      &  0.0245 \\
\hline
0.3400      &  0.0380 \\
\hline
0.6255      &  0.0220 \\
\hline
0.4225      &  0.0283 \\
\hline
\end{tabular}
\begin{tabular}{|c|c|c|}
\hline
\multicolumn{2}{|c|}{\textbf{t = 20}} \\
\hline
$P$              &  $F$       \\
\hline
0.6750      &  0.0268 \\
\hline
0.7125      &  0.0247 \\
\hline
0.3875      &  0.0362 \\
\hline
0.6255      &  0.0220 \\
\hline
0.4800      &  0.0255 \\
\hline
\end{tabular}
\begin{tabular}{|c|c|c|}
\hline
\multicolumn{2}{|c|}{\textbf{t = 50}} \\
\hline
$P$              &  $F$       \\
\hline
0.7650      &  0.0269 \\
\hline
0.7725      &  0.0258 \\
\hline
0.4575      &  0.0319 \\
\hline
0.6255      &  0.0220 \\
\hline
0.6425      &  0.0234 \\
\hline
\end{tabular}
\begin{tabular}{|c|c|c|}
\hline
\multicolumn{2}{|c|}{\textbf{t = 100}} \\
\hline
$P$              &  $F$       \\
\hline
0.6500      &  0.0324 \\
\hline
0.6150      &  0.0337 \\
\hline
0.4850      &  0.0278 \\
\hline
0.6255      &  0.0220 \\
\hline
0.6575      &  0.0219 \\
\hline
\end{tabular}
\caption{Numerics from our experiments with five different methods.}
\label{table:rp}
\end{center}
\end{table}

We now compare our algorithm against other dimensionality reduction techniques. In particular, in this paragraph we present head-to-head comparisons for the following five methods:
(i) SVD: the Singular Value Decomposition (or Principal Components Analysis) dimensionality reduction approach - we use MatLab's $svds$ function;
(ii) LLE: the famous Local Linear Embedding algorithm of \cite{LLE} - we use the MatLab code from \cite{www:lle} with the parameter $K$ determining the number of neighbors setting equal to $40$;
(iii) LS: the Laplacian score feature selection method of \cite{HCN06} - we use the MatLab code from \cite{www:soft} with the default parameters\footnote{In particular, we run $W = constructW(A);$ $Scores = LaplacianScore(A, W);$ }; 
(v) HD: we run the $k$-means algorithm on the High Dimensional data; and
(vi) RP: the random projection method we proposed in this work - we use our own MatLab implementation. 
The results of our experiments on $A$, $k=40$ and $t = 10, 20, 50, 100$  are shown in Table \ref{table:rp}. 
In terms of computational complexity, for example $t=50$, the time (in seconds) needed for all five methods (only the dimension reduction step) are $T_{SVD} = 5.9$, $T_{LLE} = 4.4$, $T_{LS} = 0.32$, $T_{HD} = 0$, and $T_{RP} = 0.03$. 
Notice that our algorithm is much faster than the other approaches while achieving worse ($t=10,20$), slightly worse ($t=50$) or slightly better ($t=100$)  approximation accuracy results. 
\subsection{A note on the mailman algorithm for matrix-matrix and matrix-vector multiplication}
\label{sec:mail}
\vspace{-0.15in}
In this section, we compare three different implementations of the third step of Algorithm~\ref{alg:selection}. As we already discussed in Section~\ref{sec:rt}, the mailman algorithm is asymptotically faster than naively multiplying the two matrices $A$ and $R$. In this section we want to understand whether this asymptotic 
behavior of the mailman algorithm is indeed achieved in a practical implementation. We compare three different approaches for the implementation of the third step of our algorithm: the first is MatLab's  function $times(A,R)$ (MM1);
 the second exploits the fact that we do not need to explicitly store the whole matrix $R$, and that the computation can be performed on the 
fly (column-by-column) (MM2); the last is the mailman algorithm~\cite{LZ09} (see Section~\ref{sec:rt} for more details). We implemented the last
 two algorithms in C using MatLab's MEX technology. We observed that when $A$ is a vector $(n=1)$, then the mailman algorithm is indeed faster than
 (MM1) and (MM2) as it is also observed in the numerical experiments of~\cite{LZ09}. Moreover, it's worth-noting that (MM2) is also superior compared to (MM1). On the other hand, 
our best implementation of the mailman algorithm for matrix-matrix operations is inferior to both (MM1) and (MM2) for any $ 10 \leq n \leq 10,000$. Based on these findings, we chose to use (MM1) for our experimental evaluations.

\textbf{Acknowledgments:} Christos Boutsidis was supported by NSF CCF 0916415 and a Gerondelis Foundation Fellowship; Petros Drineas was partially supported by an NSF CAREER Award and NSF CCF 0916415.



\clearpage

\bibliographystyle{abbrv}

\end{document}